\theoremstyle{plain}
\newtheorem{definition}{Definition}[section]
\newtheorem{lemma}[definition]{Lemma}
\newtheorem{theorem}[definition]{Theorem}
\title{Mitigating Posterior Salience Attenuation in Long-Context LLMs with Positional Contrastive Decoding}
\author{%
\textbf{Zikai Xiao}\textsuperscript{1,5\thanks{Co-first author.}} 
\textbf{Ziyang Wang}\textsuperscript{2\footnotemark[1]}  
\textbf{Wen Ma}\textsuperscript{1}  
\textbf{Yan Zhang}\textsuperscript{3} \\
\textbf{Wei Shen}\textsuperscript{1} 
\textbf{Yan Wang}\textsuperscript{1} 
\textbf{Luqi Gong}\textsuperscript{4\thanks{Co-corresponding author.}} 
\textbf{Zuozhu Liu}\textsuperscript{1,5\footnotemark[2]} \\
\textsuperscript{1}Zhejiang University, \quad
\textsuperscript{2}University of Science and Technology of China, \quad
\textsuperscript{3}Bytedance, \\
\textsuperscript{4}Zhejiang Lab, \quad
\textsuperscript{5}Zhejiang Key Laboratory of Medical Imaging Artificial Intelligence \\
\texttt{\href{mailto:zikai@zju.edu.cn}{zikai@zju.edu.cn}}
}
\begin{document}
\maketitle
\begin{abstract}
While Large Language Models (LLMs) support long contexts, they struggle with performance degradation within the context window. Current solutions incur high training costs, leaving statistical behaviors and cost-effective approaches underexplored. From the decoding perspective, we identify the Posterior Salience Attenuation (PSA) phenomenon, where the salience ratio correlates with long-text performance degradation. Notably, despite the attenuation, gold tokens still occupy high-ranking positions in the decoding space. Motivated by it, we propose the training-free Positional Contrastive Decoding (PCD) that contrasts the logits derived from long-aware attention with those from designed local-aware attention, enabling the model to focus on the gains introduced by large-scale short-to-long training. Through the analysis of long-term decay simulation, we demonstrate that PCD mitigates attention score degradation. Experimental results show that PCD achieves state-of-the-art performance on long-context benchmarks.

\end{abstract}

\section{Introduction}
The maximum context lengths of large language models (LLMs) have steadily increased, yet their effective utilization remains limited:  most open-source models experience sharp performance degradation beyond $16$k tokens \citep{hsieh2024ruler, hengle2024multilingual, zhang2024bench}. Previous works have sought to explain and improve context utilization. 
From empirical observations derived from the generated text, the "lost in the middle" effect \citet{liu2024lost, zhang2024bench} reveals inconsistent performance drops across different positions \citep{zhang2024bench}, and the "Know but Don’t Tell" phenomenon reveals that while models encode target information, they fail to leverage it in generating accurate responses \citep{lu2024insights}, a gap also examined in theoretical models of reasoning \citep{bi2025cot}.

To fully utilize the context, data-driven methods have been proposed, such as synthetic key-value retrieval mechanisms \citep{an2024make} and multi-document question-answering frameworks \citep{dataartificial}.
In terms of model design, \citet{tworkowski2024focused} enhances attention layers with external memory, while \citet{zhang2024found} leverages Multi-scale Positional Encoding for capturing multi-scale distance awareness. These approaches often involve costly annotation and training.
For inference-time methods, Segment Reranking \citep{dsouza2024evaluating, peysakhovich2023attention} addresses the "lost in the middle" problem at the prompt level by rearranging key segments to the beginning and end of the context. Additionally, \citep{hsieh2024found} estimates and calibrates positional bias using an auxiliary priori dummy document. Prompt-dependent solutions remain brittle and highly sensitive to the hyper-specific prompt formulation. Thus, devising quantitative analysis and cost-effective solutions for long context utilization remains challenging.

In this work, we uncover the Posterior Salience Attenuation (PSA) phenomenon and corresponding Positional Contrastive Decoding (PCD) to make LLMs effectively utilize the context. Through analysis in decoding-space (Section~\ref{sec:psa}), we found that the posterior salience of the gold label gradually degrades as context length grows, when controlling tasks to maintain consistent difficulty. 
Despite the posterior salience decreases, the ranking of the gold label often remains among the top ranks (top 0.006\% in Fig. ~\ref{fig:analysis_exp} (a), regarding the entire vocabulary \cite{dubey2024llama}), suggesting a decoding strategy that amplifies the salience of gold token. Through the analysis of numerous error cases, we found that the model tends to adopt tokens that are closer to the query (proximal tokens), leading to incorrect responses. 
To enhance the distance awareness of LLMs, we proposed PCD that contrasts the logits derived from standard attention scores with those from designed local-aware attention, enabling the model to focus on the gains introduced by large-scale short-to-long training. 
We progressively induce excessive rotation beyond the default angles from high to low frequencies to construct the local-aware attention, inspired by insights from frequency analysis in RoPE \citep{su2024roformer}. 
The numerical and theoretical analysis and experimental analysis demonstrate that PCD effectively enhances long-awareness.

\section{Methods}
In long-context scenarios, denote $f(\theta)$ as LLM with parameters $\theta$. Given a sequence of tokens $\mathbf{x} = [x_1, ..., x_L]$ from the input context, the model $f(\theta)$ generates the next token $y^*$ by computing the posterior probability distribution $P_{f(\theta)}(y^* | \mathbf{x}_{\leq L})$, where $\mathbf{x}_{\leq L}$ represents the input sequence. 
\begin{figure}[h]
    \centering
    \scriptsize
    \includegraphics[width=0.95\columnwidth]{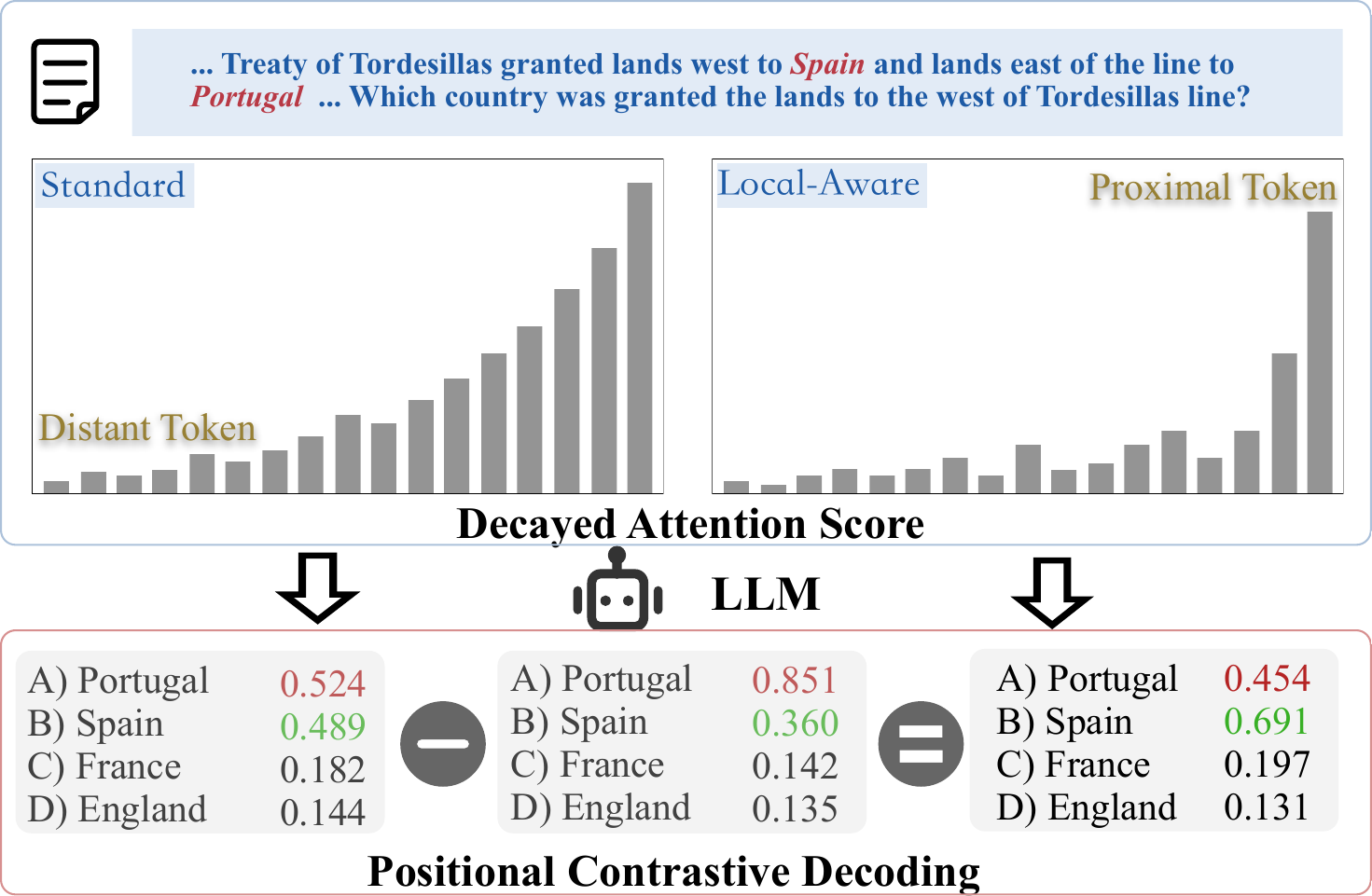} 
    \caption{An illustration of PCD, contrasting logits from long-aware and local-aware attention, and amplifying the gains of large-scale short-to-long training.}
    \vspace{-5pt} 
    \label{fig:pcd_main}
\end{figure}
\subsection{Posterior Salience Attenuation in Long Context LLMs}
\label{sec:psa}
PSA describes a statistical rank decrease of the gold token \(y^*\) in the model’s decoding space. 
 as the context length \(L\) increases.
Concretely, for an input \(x_{\le L}\), Let $ Q $ denote the query set, $ \mathcal{V} $ the vocabulary, and $ y_i^* $ the gold token for the $ i $-th query. we define the \emph{salience score}:
\begin{equation}
\resizebox{\columnwidth}{!}{$
S(L) = \frac{1}{|Q|}\sum_{i=1}^{|Q|}
  \frac{1}{
    1 \;+\; \sum_{v \in \mathcal{V}}
    \mathbb{I}\bigl(
      P_{f(\theta)}(v \mid x_{\le L}^{(i)})
      > 
      P_{f(\theta)}(y_i^* \mid x_{\le L}^{(i)})
    \bigr)
  }
$}
\end{equation}
where $ \mathbb{I}(\cdot) $ is the indicator function, which evaluates to $1$ if the condition inside is true and $0$ otherwise. The $ S(L) $ quantifies the extent to which the model prioritizes $ y_i^* $ by counting the number of tokens $ v \in \mathcal{V} $ whose predicted probabilities exceed that of $ y_i^* $. It then computes the reciprocal of this count to reflect the rank of $ y_i^* $ and averages these reciprocal ranks across all queries in $ Q $. 
A decreasing trend in $ S(L) $ with increasing $ L $ indicates a weakening of the model's ability to prioritize the gold token as the context length grows.

\begin{figure}[t]
    \centering
    \scriptsize
    \includegraphics[width=0.90\columnwidth]{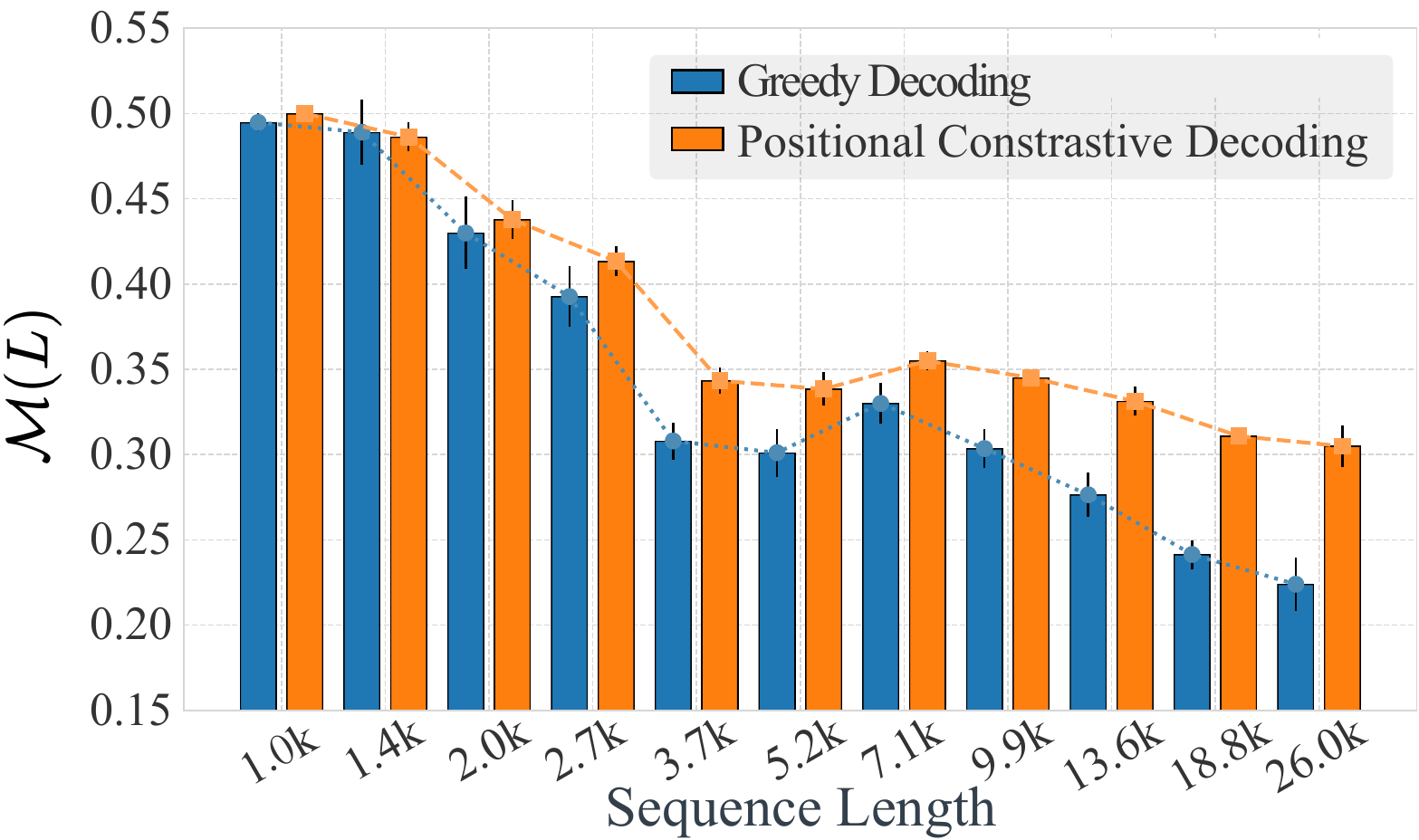} 
    \caption{PCD effectively alleviates the decrease in salience scores with increasing input length.}
    \vspace{-5pt} 
    \label{fig:M_L}
\end{figure}
Fig.~\ref{fig:M_L} illustrates the salience score per incorrect sample as context length increases in the key-value retrieval task. As the context length grows, the salience score of incorrect predictions decreases. Notably, the gold label consistently ranks highly, typically within the top 8 tokens in decoding space, even for vocabularies as large as 128k, as shown in Fig.~\ref{fig:analysis_exp} (a). 
This suggests that while the model positioned the gold label within a high rank, it failed to select it due to insufficient confidence, underscoring the need for decoding strategies that boost its prominence.

\subsection{Positional Contrastive Decoding}
PCD contrasts logits from standard (long-aware) attention with those from a designed local-aware attention. This aims to mitigate PSA and counteract proximal bias by amplifying long-distance signals relative to local ones, thereby better leveraging the model's short-to-long context training. Let $d$ denote the embedding dimension, and $j \in \{1, 2, ..., d/2\}$ index the rotation blocks in the position encoding matrix. The total number of rotation blocks is $d/2$, where each block processes a 2D subspace of the embedding space.

\noindent\textbf{Standard Logits} Based on Rotary Position Embedding (RoPE) \cite{su2024roformer}, which encodes positional information through rotational transformations, we compute the position-aware query and key vectors for a token at position $m$ with embedding $\mathbf{x}_m \in \mathbb{R}^d$ as:
\begin{equation}
\mathbf{q}_m = R_{\Theta, m}^{d} \mathbf{W}_q \mathbf{x}_m, \quad \mathbf{k}_n = R_{\Theta, n}^{d} \mathbf{W}_k \mathbf{x}_n
\end{equation}
where $R_{\Theta, m}^{d} \in \mathbb{R}^{d \times d}$ is a block-diagonal rotation matrix composed of $d/2$ orthogonal sub-blocks:
\begin{equation}
R_{\Theta, m}^{d} = \bigoplus_{j=1}^{d/2}\begin{pmatrix}
\cos m\theta_j & -\sin m\theta_j \\
\sin m\theta_j & \cos m\theta_j
\end{pmatrix}
\end{equation}
The angular frequency $\theta_j$ follows a geometric progression $\theta_j = B^{-2(j-1)/d}$, establishing the long-term decay property of attention scores from proximal to distant tokens. The standard logits are then obtained as $\mathbf{L} = f_{\theta}(\mathbf{q}_m, \mathbf{k}_n)$.

\noindent\textbf{Local-Aware Logits} Inspired by the decay properties of positional encodings—where high-frequency components focus on local differences and low-frequency components capture global patterns—we induce over-rotation into RoPE's low-frequency encodings, which encourages the model to be more sensitive to local details. We lower the base frequency from $B$ to $B'$ to update the frequencies $\theta'_j = (B')^{-2(j-1)/d}$, then gradually increase the rotation from high-frequency to low-frequency position encodings via a transition function $T(x) = 2 - \exp(\alpha x), \quad x \in [0, 1]$, where $x = \frac{j}{d/2}$ maps the block index $j \in \{0, 1, \dots, d/2\}$ to the normalized interval $[0, 1]$. The modified angular frequencies are computed as:
\begin{equation}
\theta^*_j = T\left(\frac{j}{d/2}\right) \theta_j + \left(1 - T\left(\frac{j}{d/2}\right)\right) \theta'_j
\end{equation}
Using the over-rotated matrix $R^*_m$ that incorporates $\theta^*_j$, the perturbed logits are computed as: 
\begin{equation}
\mathbf{L}^* = f\left(R^*_m \mathbf{W}_q \mathbf{x}_m,  
R^*_n \mathbf{W}_k \mathbf{x}_n\right)
\end{equation}

\noindent\textbf{Contrastive Logits} We combine the standard logits $\mathbf{L}$ and over-rotated logits $\mathbf{L}^*$ using a contrastive mechanism that operates on the top $\gamma$ tokens (ranked by probability) in $\mathbf{L}_0$. The contrastive logits are computed as:
\begin{equation}
\tilde{\mathbf{L}} = (1 + \beta) \mathbf{L} - \beta \mathbf{L}^*
\end{equation}
where $\beta > 0$ controls the contrast intensity, and $\gamma \in \mathbb{N}$ determines the number of top tokens to which the contrastive mechanism is applied. 

\subsection{Spectral Analysis of Contrastive Decoding with RoPE}

The contrastive decoding operation enhances long-range attention through spectral interference. By introducing over-rotated low-frequency components, the modified attention spectrum \( S_{\text{cd}}(k) \) slows the decay rate of attention scores by a factor of \( (\ln B / \ln B')^{2/d} \), where \( B \) and \( B' \) are the bases of the original and perturbed angular frequencies, respectively. Furthermore, the contrastive coefficient \( \beta \) amplifies the original decay curve by incorporating the influence of long-term, slow-decaying logits. For a detailed derivation, see Appendix~\ref{subsec:proof}.

\section{Experiments and Analysis}

\subsection{Experiment Setup}
\noindent\textbf{Models and Datasets.} We utilize Llama3-8B-8k model \citep{dubey2024llama}, the long-context fine-tuned variants Llama3-8B-262k and Llama3-8B-1048k by Gradient AI. We tested tasks from \textbf{RULER}\cite{hsieh2024ruler}, 
\textbf{InfiniteBench}\cite{zhang2024bench}. We test 262k variant on \textbf{LongBench}\cite{bai2024longbench}.

\noindent\textbf{Baselines.} 
For comparison, we employed decoding methods including greedy search, beam search, and DoLa \citep{chuang2024dola}, along with training-free calibration techniques such as MsPoE \citep{zhang2024found}, Segment-Reranking \citep{dsouza2024evaluating, peysakhovich2023attention}, and Rephrasing \citep{zhang2024bench, yu2023paraphrasing}.

\subsection{Comparison Results} 
As shown in Table~\ref{tab:multi_scale_performance}, PCD consistently enhances model performance across varying context lengths in retrieval tasks without additional training. Beam search also improves performance, while rephrasing does not alter the model's inherent retrieval ability. Segment Reranking (SegR) is task-dependent, offering benefits only when semantic order is less critical.
In real-world tasks (Table~\ref{tab:qa_performance_comparison}), both MsPoE and PCD demonstrate stable performance. However, rephrasing is sensitive to evaluation methods, and SegR struggles in complex semantic contexts due to its reliance on simpler ordering patterns.

\begin{table}[htbp]
\centering

\caption{Ablation study of PCD hyperparameters on Infinite Bench (accuracy \%, variance \%). 
Adjusted each hyperparameter individually from the optimal set and computed mean and variance over 3 runs.
}

\label{tab:ablation_study}
\tiny
\setlength{\tabcolsep}{2pt}
\begin{tabular}{@{}lccccc@{}}
\toprule
\textbf{Parameter} & 
\textbf{Tested Range} & 
\textbf{Recommended} & 
\textbf{Optimal} & 
\textbf{Acc. (\%)} & 
\textbf{Variance (\%)} \\
\midrule
Base (w/o PCD) 
& --  &  --
& -- & 72.00 &  \\ 

\cmidrule(r){1-6}
Transition ($\alpha$) 
& [0.1, 0.5] & 0.1--0.2 
& 0.2 & 78.50 & 1.2 \\ 

\cmidrule(r){1-6}
Contrast ($\beta$) 
& [1.0, 4.0] & 1.5--2.5 
& 2.5 & 77.90 & 3.1 \\ 

\cmidrule(r){1-6}
Frequency ($\frac{B'}{B}$) 
& [1e-6, 1e-1] & 1e-4 
& 1e-4 & 75.80 & 2.3 \\ 

\cmidrule(r){1-6}
Top-$\gamma$ 
& [10, 200] & 20--30 
& 30 & 71.50 & 1.8 \\ 

\bottomrule
\end{tabular}
\end{table}

\subsection{Hyperparameter Ablation}
The hyperparameter ablation on InfiniteBench is evaluated in Table~\ref{tab:ablation_study}. Optimal performance is achieved with $\beta = 2.5$, which scales the preference for long-range awareness. Moderate frequency perturbation ($\frac{B}{B'}=10^4$) performs best. The transition function coefficient $\alpha$ and top-$\gamma$ are stable and generally require no adjustment.

\setlength{\abovecaptionskip}{4pt} 
\setlength{\belowcaptionskip}{4pt} 
\begin{table*}[t]
\centering
\caption{Performance Comparison on Tasks from RULER and InfiniteBench}
\label{tab:multi_scale_performance}
\scriptsize
\begin{tabular*}{\textwidth}{@{\extracolsep{\fill}} l l r@{\hspace{1em}}r@{\hspace{1em}}r r@{\hspace{1em}}r@{\hspace{1em}}r r@{\hspace{1em}}r@{\hspace{1em}}r @{}}
\toprule
\multirow{2}{*}{\textbf{Model (Max Context)}} & \multirow{2}{*}{\textbf{Method}} & \multicolumn{3}{c}{\textbf{InfiniteBench: KV Retrieval (Accuracy \%)}} & \multicolumn{3}{c}{\textbf{RULER: Variable Tracking (F1 Score)}} \\
\cmidrule(lr){3-5} \cmidrule(l){6-8}
 & & \multicolumn{1}{c}{4k} & \multicolumn{1}{c}{8k} & \multicolumn{1}{c}{16k} & \multicolumn{1}{c}{4k} & \multicolumn{1}{c}{8k} & \multicolumn{1}{c}{16k} \\
\midrule
\textbf{Llama-3-8B (8k)} 
  & Base & 97.6 & 92.4 & {--} & 66.38 & 57.93 & {--} \\
  & PCD & \textbf{100} ($\uparrow$2.4\%) & \textbf{91.0} ($\downarrow$1.4\%) & {--} & \textbf{68.91} ($\uparrow$2.53) & \textbf{60.07} ($\uparrow$2.14) & {--} \\
\addlinespace

\textbf{Llama-3-8B (262k)} 
  & Base & 89.2 & 72.0 & 52.0 & 74.02 & 71.21 & 64.40 \\
  & Beam-Search & 89.0 & 77.0 & 53.0 & {74.34} & {71.25} & {65.08} \\
  & DoLa-Low & 92.0 & 76.0 & 53.0 & {76.48} & {72.93} & {66.48} \\
  & DoLa-High & 93.0 & 76.0 & 54.0 & {77.19} & {72.87} & {67.29} \\
  & MsPoE & 90.0 & 72.0 & 51.0 & {74.29} & {70.03} & {65.10} \\
  & SegR & {93.0} & {76.0} & {54.0} & {0.0} & {0.0} & {0.0} \\
  & Rephrasing & {92.0} & {73.0} & {50.0} & {81.60} & {79.28} & {70.56} \\
  & PCD & \textbf{92.0} ($\uparrow$2.8\%) & \textbf{79.0} ($\uparrow$7.0\%) & \textbf{55.0} ($\uparrow$3.0\%) & \textbf{81.80} ($\uparrow$7.78) & \textbf{77.92} ($\uparrow$6.71) & \textbf{69.04} ($\uparrow$4.64) \\
\addlinespace

\textbf{Llama-3-8B (1048k)} 
  & Base & 94.0 & 92.0 & 84.0 & 67.15 & 72.78 & 65.21 \\
  & PCD & \textbf{95.0} ($\uparrow$1.0\%) & \textbf{96.0} ($\uparrow$4.0\%) & \textbf{87.0} ($\uparrow$3.0\%) & 66.77 ($\downarrow$0.38) & 71.19 ($\downarrow$1.59) & \textbf{69.11} ($\uparrow$3.9) \\
\bottomrule
\end{tabular*}

\vspace{0.5em}
\footnotesize
\textit{Notes}: 
1) Model variants are distinguished by their maximum context lengths (e.g., 8k, 262k, 1048k). 
2) Arrows indicate PCD's relative change from the baseline. 
3) Metrics: Key-Value Retrieval (Accuracy\%); Variable Tracking (F1 Score).
\end{table*}
\FloatBarrier

\setlength{\abovecaptionskip}{4pt} 
\setlength{\belowcaptionskip}{4pt} 
\begin{table*}[t]
\centering
\caption{Performance Comparison on LongBench}
\label{tab:qa_performance_comparison}
\scriptsize
\begin{tabular}{@{}l*{7}{c}c@{}}
\toprule
\textbf{Method} & 
\multicolumn{1}{c}{\textbf{Multifieldqa\_zh}} & 
\multicolumn{1}{c}{\textbf{Narrativeqa}} & 
\multicolumn{1}{c}{\textbf{Multifieldqa\_en}} & 
\multicolumn{1}{c}{\textbf{2wikimqa}} & 
\multicolumn{1}{c}{\textbf{Qasper}} & 
\multicolumn{1}{c}{\textbf{Musique}} & 
\multicolumn{1}{c}{\textbf{Hotpotqa}} & 
\multicolumn{1}{c}{\textbf{Avg}} \\
\midrule
Base & 
46.72 & 
20.03 & 
51.27 & 
15.50 & 
26.26 & 
6.87 & 
15.22 & 
25.98 \\
MsPoE & 
50.02 & 
18.96 & 
51.39 & 
13.97 & 
24.86 & 
7.59 & 
17.16 & 
26.27 \\
SegR & 
4.86 & 
4.18 & 
27.41 & 
10.13 & 
26.41 & 
3.94 & 
8.31 & 
12.18 \\
Rephrasing & 
45.13 & 
18.94 & 
49.53 & 
13.22 & 
28.70 & 
6.25 & 
13.28 & 
25.02 \\
PCD & 
\textbf{51.09} \scriptsize{($\uparrow$4.37)} & 
\textbf{20.31} \scriptsize{($\uparrow$0.28)} & 
50.11 \scriptsize{($\downarrow$1.16)} & 
\textbf{16.47} \scriptsize{($\uparrow$0.97)} & 
\textbf{27.13} \scriptsize{($\uparrow$0.87)} & 
6.70 \scriptsize{($\downarrow$0.17)} & 
\textbf{15.29} \scriptsize{($\uparrow$0.07)} & 
\textbf{26.87} \scriptsize{($\uparrow$0.89)} \\
\bottomrule
\end{tabular}

\vspace{0.5em}
\raggedright\footnotesize
\end{table*}
\FloatBarrier

\begin{figure*}[t!]
    \centering
    \scriptsize
    \begin{tabular}{cccc}
        \includegraphics[width=0.56\columnwidth]{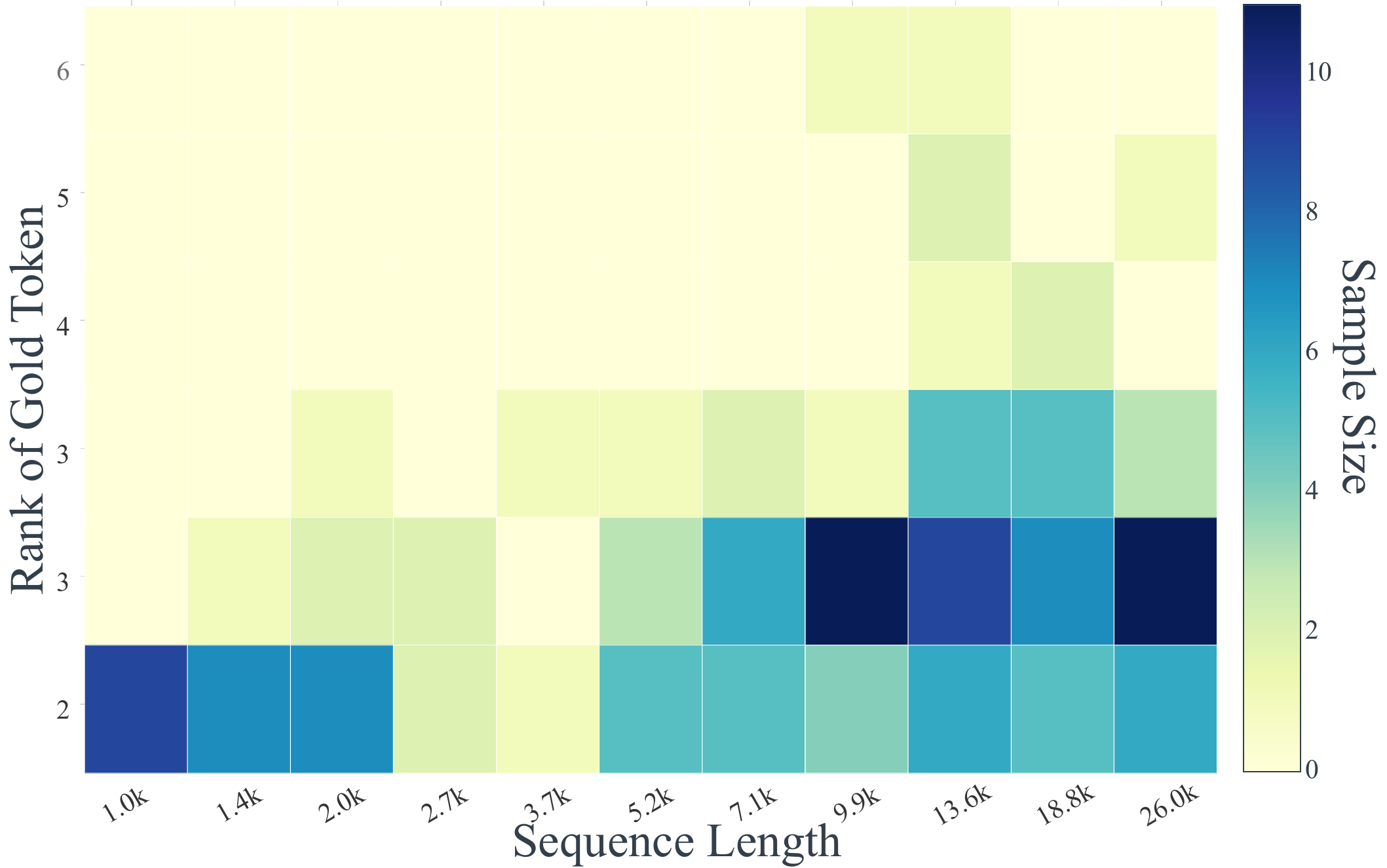}  &
        \includegraphics[width=0.56\columnwidth]{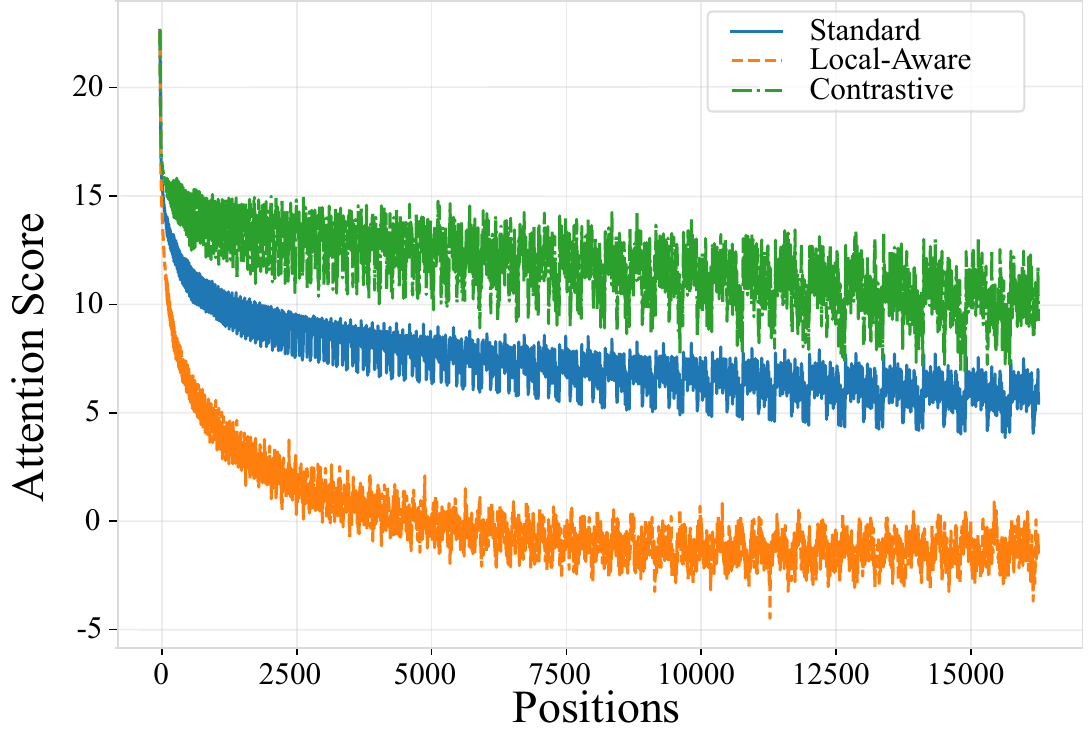}   &
        \includegraphics[width=0.66\columnwidth]{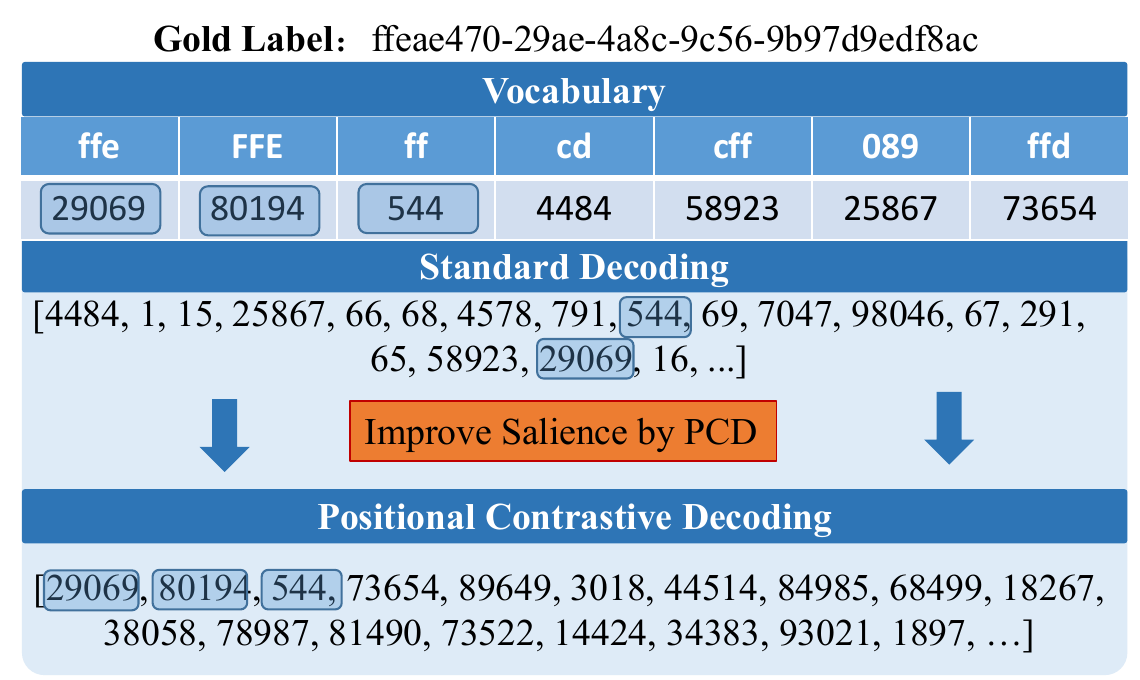}           \\
        \\
        (a) & (b) & (c) 
    \end{tabular}
    \caption{(a) Distribution of gold label ranks across samples; 
         (b) Single Layer Analysis: PCD mitigates salience attenuation by decelerating long-term decay; PCD mitigates long-term attention decay by slowing the salience ratio degradation; 
         (c) Case Study: PCD enhances the ranks of answer-related tokens.}
    \vspace{-5pt}
    \label{fig:analysis_exp}
\end{figure*}

\subsection{Simulating Long-Term Decay of PCD}
We employed single-layer attention to investigate how PCD mitigates long-range attention decay. We substrated logits of standard RoPE  (with \( B = 10^6 \)) against those generated from an over-rotated variant (with \( B' = 10^4 \)). We applied a smooth transition function (\( \alpha = 0.4 \)) and a contrastive coefficient \( \beta = 0.6 \). Attention scores were computed for a sequence length of \( 16,384 \) with an embedding dimension \( d = 512 \). The results in Fig. ~\ref{fig:analysis_exp} (b) demonstrate that the over-rotated variant exhibits sharper decay for local modeling, while PCD mitigates long-range decay, enhancing global awareness. To further validate the effectiveness of PCD in slowing down attention score decay, we conducted additional simulations under varying head dimensions, frequency parameters \( B' \), and contrastive coefficients \( \beta \). For more details, refer to Section~\ref{sec:sup_simulation}.

\subsection{Qualitative Study}
We demonstrate the phenomenon of PSA, how PCD recalibrates the logits when the model fails to predict the gold label, as shown in Fig.~\ref{fig:analysis_exp} (c). 
Greedy decoding may prioritize irrelevant tokens, but with PCD, logits are recalibrated to boost the rank of the correct token and its related variants.
More details can be found in \ref{sec:sup_case_study}.

\subsection{Conclusion}
We investigate the performance degradation of long-context LLMs and identify the Posterior Salience Attenuation (PSA) phenomenon. To mitigate this, we propose Positional Contrastive Decoding (PCD), a cost-effective method that amplifies the salience of the gold token via contrastive logits. Experimental results show PCD achieves consistent performance on long-context benchmarks.

\section*{Limitations}
PCD cannot extend the model's attention window. It also shows minor improvement for short-text tasks and may vary in effectiveness depending on the positional encoding design. Additionally, hybrid contrastive decoding between different series of models and embedding models in Retrieval-Augmented Generation (RAG) systems remains underexplored.

\section*{Acknowledgements}
This work is supported by the National Natural Science Foundation of China (Grant No. 62476241), the Natural Science Foundation of Zhejiang Province, China (Grant No. LZ23F020008), Zhejiang Key Laboratory of Medical Imaging Artificial Intelligence, and Zhejiang University–Angelalign Inc. R\&D Center for Intelligent Healthcare.

\bibliography{custom}

\appendix

\section{Appendix}
\label{sec:appendix}

\subsection{Theoretical Analysis of Contrastive Decoding with RoPE}
\label{subsec:proof}

\begin{definition}[RoPE Transformation]\label{def:rope}
For embedding dimension $d=2h$, position indices $m,n\in\mathbb{N}$, and word embeddings $x_m,x_n\in\mathbb{R}^d$, the RoPE-formulated query and key vectors are:
\begin{equation}
\begin{aligned}
q_m &= R_\Theta(m)W_qx_m, \\
k_n &= R_\Theta(n)W_kx_n
\end{aligned}
\end{equation}
where the block-diagonal rotation matrix $R_\Theta(m)\in\mathbb{R}^{d\times d}$ consists of $h$ orthogonal sub-blocks:
\begin{equation}
\begin{aligned}
R_{\Theta}(m) = \bigoplus_{j=1}^h R_{\theta_j}(m), \\
R_{\theta_j}(m) = \begin{pmatrix}
\cos m\theta_j & -\sin m\theta_j \\
\sin m\theta_j & \cos m\theta_j
\end{pmatrix}
\end{aligned}
\end{equation}
with angular parameters $\theta_j = B^{-2(j-1)/d}$.
\end{definition}

\begin{lemma}[Spectral Representation]\label{lem:spectral}
The attention score between positions $m$ and $n$ admits:
\begin{equation}\label{eq:spectral}
S(k) \triangleq q_m^\top k_n = \sum_{j=1}^h A_j\cos(k\theta_j + \phi_j)
\end{equation}
where $k=m-n$, $A_j=\|(W_qx_m)_j\|\|(W_kx_n)_j\|$, and $\phi_j$ is the phase difference between the $j$-th components of $W_qx_m$ and $W_kx_n$ in polar coordinates.
\end{lemma}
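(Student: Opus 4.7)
The plan is to expand $q_m^\top k_n$ directly using Definition~\ref{def:rope} and exploit the block-diagonal orthogonality of $R_\Theta(m)$. First I would write
\begin{equation}
q_m^\top k_n = (W_q x_m)^\top R_\Theta(m)^\top R_\Theta(n)(W_k x_n),
\end{equation}
and then use the fact that each $R_{\theta_j}(m)$ is an orthogonal 2D rotation, so $R_{\theta_j}(m)^\top R_{\theta_j}(n) = R_{\theta_j}(n-m)$. Because $R_\Theta$ is block-diagonal, this upgrades to $R_\Theta(m)^\top R_\Theta(n) = \bigoplus_{j=1}^h R_{\theta_j}(n-m)$, so the bilinear form decouples across the $h$ two-dimensional blocks and the sum structure in (5) appears automatically.

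Next I would analyze each 2D term. Let $u_j, v_j \in \mathbb{R}^2$ denote the $j$-th coordinate pair of $W_q x_m$ and $W_k x_n$ respectively, written in polar form as $u_j = \|u_j\|(\cos\alpha_j, \sin\alpha_j)$ and $v_j = \|v_j\|(\cos\beta_j, \sin\beta_j)$. Since $R_{\theta_j}(n-m)v_j$ is the rotation of $v_j$ by angle $(n-m)\theta_j = -k\theta_j$, its polar angle becomes $\beta_j - k\theta_j$, and the standard dot-product-in-polar-form identity gives
\begin{equation}
u_j^\top R_{\theta_j}(n-m) v_j = \|u_j\|\|v_j\|\cos(\alpha_j - \beta_j + k\theta_j).
\end{equation}
Setting $A_j = \|u_j\|\|v_j\|$ and $\phi_j = \alpha_j - \beta_j$ (matching the $A_j, \phi_j$ declared in the lemma) and summing over $j$ then yields (5).

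The remaining work is essentially bookkeeping; the only real point of care is the sign convention relating $k = m-n$ to the composite rotation $R_{\theta_j}(n-m)$, so that the phase inside the cosine ends up as $+k\theta_j$ rather than $-k\theta_j$. The evenness of cosine lets one absorb any sign ambiguity into the definition of $\phi_j$, which is convenient. I expect no serious obstacle: the proof is a careful expansion of the block-diagonal rotation structure plus one application of the cosine sum-angle identity, and the lemma's statement is arranged so that the amplitudes and phases drop out directly from the polar decomposition of the per-block query and key coordinates.
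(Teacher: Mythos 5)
Your proposal is correct and follows essentially the same route as the paper's proof: decompose the bilinear form over the $h$ two-dimensional blocks, use the orthogonality/composition property of the rotations to reduce to a single relative rotation by $k\theta_j$, and read off $A_j\cos(k\theta_j+\phi_j)$ from the polar decomposition of the per-block query and key coordinates. Your write-up is somewhat more explicit about the polar-coordinate step and the sign convention, but the substance is identical.
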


\begin{proof}
For each 2D subspace $V_j = \mathbb{R}^2$, let $(q_j^{(1)}, q_j^{(2)}) = R_{\theta_j}(m)(W_qx_m)_j$ and similarly for $k_j$. Using the inner product invariance under rotation:
\begin{equation}
\begin{aligned}
S_j(k) &= \langle R_{\theta_j}(m)(W_qx_m)_j,\: R_{\theta_j}(n)(W_kx_n)_j \rangle \\
&= \langle (W_qx_m)_j,\: R_{\theta_j}(k)^\top(W_kx_n)_j \rangle \\
&= A_j\cos(k\theta_j + \phi_j)
\end{aligned}
\end{equation}
Summing over all subspaces gives the complete spectrum.
\renewcommand{\qedsymbol}{}
\end{proof}

\begin{definition}[Perturbed RoPE]\label{def:perturbed}
Define a frequency-perturbed variant with modified base $B'=10^2$, yielding angular parameters:
\begin{equation}
\theta'_j = (B')^{-2(j-1)/d},\quad j=1,...,h
\end{equation}
with corresponding attention scores:
\begin{equation}
S'(k) = \sum_{j=1}^h A_j\cos(k\theta'_j + \phi_j)
\end{equation}
\end{definition}

\begin{lemma}[Decay Characterization]\label{lem:decay}
For $k > B^{2/d}$, the original scores satisfy:
\begin{equation}\label{eq:decay}
|S(k)| \leq C_1k^{-d/2}e^{-k^{2/d}\ln B}
\end{equation}
where $C_1 = \sum_{j=1}^h A_j(B^{-2(j-1)/d})^{d/2}$.
\end{lemma}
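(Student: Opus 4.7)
The plan is to derive the decay bound by exploiting oscillatory cancellation in the spectral sum from Lemma~\ref{lem:spectral}, leveraging the geometric structure $\theta_j = B^{-2(j-1)/d}$ to separate the regime where phase cancellation dominates from the regime where amplitude weights dominate.

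First, I would rewrite the real-valued spectrum as
\begin{equation}
S(k) = \mathrm{Re}\sum_{j=1}^h A_j e^{i(k\theta_j + \phi_j)}
\end{equation}
and apply summation by parts (Abel summation), shifting the burden of oscillation onto partial sums $H_j := \sum_{l=1}^j e^{i(k\theta_l + \phi_l)}$ while the amplitude variation is absorbed into finite differences $A_{j+1} - A_j$. This reduces the task to the inequality $|S(k)| \le |A_h||H_h| + \sum_{j<h}|A_{j+1}-A_j||H_j|$, which requires a uniform bound on $|H_j|$.

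Second, I would bound $|H_j|$ using a Dirichlet-kernel-type estimate adapted to the geometric phase sequence. Since the consecutive gap $\theta_l - \theta_{l+1} = \theta_l(1 - B^{-2/d})$ is proportional to $\theta_l$ itself, for indices with $k\theta_l \gtrsim 1$ the partial sum is controlled by $|1 - e^{ik(\theta_l - \theta_{l+1})}|^{-1}$, producing a bound polynomial in $k$. Third, I would identify the critical index $j^\star$ with $k\theta_{j^\star} \approx 1$, namely $j^\star \approx 1 + (d/2)\log_B k$. For $j \le j^\star$ the phases oscillate rapidly and yield the polynomial prefactor $k^{-d/2}$; for $j > j^\star$ the cosines are nearly constant, but the amplitude weights $A_j\theta_j^{d/2}$ aggregate into the stretched-exponential tail $C_1 e^{-k^{2/d}\ln B} = C_1 B^{-k^{2/d}}$. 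Splicing the two regimes and matching constants gives the claimed bound.

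The hard part will be making the partial-sum estimate rigorous for a geometric (rather than arithmetic) phase progression, because Dirichlet's kernel bound does not directly apply. My approach would be to group neighboring frequencies whose ratio is close to $1$ into quasi-arithmetic blocks, control each block's contribution, and then sum over blocks. The most delicate bookkeeping is to simultaneously track the polynomial prefactor $k^{-d/2}$ and the stretched-exponential tail $B^{-k^{2/d}}$ through this blocking procedure, and to verify that the telescoping amplitude weights collapse to exactly $C_1 = \sum_j A_j \theta_j^{d/2}$ rather than a looser sum; this is where the condition $k > B^{2/d}$ enters, ensuring that at least one index lies on each side of $j^\star$ so both regimes contribute nontrivially.
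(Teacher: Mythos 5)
Your overall skeleton --- split the spectral sum at a critical index $j^\star$ with $k\theta_{j^\star}\approx 1$ and treat the two regimes separately --- matches the paper, which splits at $j_0=\lceil\tfrac{d}{2}\ln k/\ln B\rceil$. But the mechanisms you assign to each regime do not work, and they are not the ones the paper uses. The central problem is your claim that Abel summation plus a Dirichlet-kernel estimate extracts the factor $k^{-d/2}$ from the oscillating block $j\le j^\star$. That block contains only $j^\star\approx 1+\tfrac{d}{2}\log_B k=O(\log k)$ terms; cancellation among logarithmically many bounded summands can save at most a factor comparable to the number of terms, never a negative power of $k$, and no grouping into quasi-arithmetic blocks circumvents this counting obstruction. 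In the paper the polynomial factor is not obtained from cancellation at all: term (I) is bounded crudely by $\sum_{j\le j_0}A_j$ with $|\cos|\le 1$, and the $k^{-d/2}$ is meant to come from the weights $\theta_j^{d/2}=B^{-(j-1)}$ that define $C_1$ (for $j\le j_0$ one has $k\theta_j\ge 1$, i.e.\ $\theta_j^{d/2}\ge k^{-d/2}$), not from phase interference.

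Your treatment of the tail $j>j^\star$ is also misattributed. If the cosines there are ``nearly constant,'' then $\sum_{j>j^\star}A_j\cos(k\theta_j+\phi_j)\approx\sum_{j>j^\star}A_j\cos\phi_j$, which carries no $k$-dependence whatsoever; amplitude weights cannot ``aggregate into'' a factor $e^{-k^{2/d}\ln B}=B^{-k^{2/d}}$ on their own. The paper instead obtains the stretched exponential from the pointwise bound $|\cos x|\le e^{-x^2/2}$ applied at $x=k\theta_j$, so the decay is carried by the cosine itself, term by term, not by the weights. Finally, your ``splicing the two regimes'' step and the paper's ``combining both terms'' step face the same unresolved issue: each regime contributes additively, and a sum of a $k^{-d/2}$ term and a $B^{-k^{2/d}}$ term is controlled by their maximum, not by the product $C_1k^{-d/2}e^{-k^{2/d}\ln B}$ asserted in the statement; neither your outline nor the paper's sketch closes that gap.
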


\begin{proof}
Split the sum at critical index $j_0 = \lceil \frac{d}{2}\ln(k)/\ln B \rceil$:
\begin{equation}
\begin{aligned}
|S(k)| &\leq \underbrace{\sum_{j=1}^{j_0}A_j\cos(k\theta_j)}_{\text{(I)}} + \underbrace{\sum_{j=j_0+1}^h A_j\cos(k\theta_j)}_{\text{(II)}}
\end{aligned}
\end{equation}
Term (I) decays algebraically:
\begin{equation}
\sum_{j=1}^{j_0}A_j \leq C_1k^{-d/2}
\end{equation}
Term (II) decays exponentially using $|\cos x| \leq e^{-x^2/2}$ for $x \geq 1$:
\begin{equation}
\sum_{j=j_0+1}^h A_je^{-(k\theta_j)^2/2} \leq C_2e^{-k^{2/d}\ln B}
\end{equation}
Combining both terms gives the dominant algebraic decay.
\renewcommand{\qedsymbol}{}
\end{proof}

\begin{theorem}[Contrastive Decoding Enhancement]\label{thm:enhancement}
The contrastive scores $S_{\text{CD}}(k) = (1+\lambda)S(k) - \lambda S'(k)$ satisfy:
\begin{equation}
\limsup_{k\to\infty} \frac{\ln|S_{\text{CD}}(k)|}{\ln|S(k)|} \leq \left(\frac{\ln B'}{\ln B}\right)^{2/d} < 1
\end{equation}
\end{theorem}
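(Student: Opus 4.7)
The plan is to combine the spectral decay bound of Lemma \ref{lem:decay} with a triangle inequality on the contrastive combination $S_{\text{CD}}(k) = (1+\lambda)S(k) - \lambda S'(k)$, then pass to the logarithmic limit, exploiting that lowering the base from $B$ to $B'$ produces strictly slower exponential decay.

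First, I would instantiate Lemma \ref{lem:decay} for both $S$ and the perturbed $S'$ from Definition \ref{def:perturbed}, yielding
\begin{equation*}
|S(k)| \leq C_1 k^{-d/2} e^{-k^{2/d}\ln B}, \qquad |S'(k)| \leq C_1' k^{-d/2} e^{-k^{2/d}\ln B'}.
\end{equation*}
Since $B' < B$, the ratio $|S(k)|/|S'(k)|$ is dominated by $e^{-k^{2/d}(\ln B - \ln B')}$, which vanishes. Hence $|S(k)|$ is asymptotically negligible compared to $|S'(k)|$, and the two-sided estimate $|S_{\text{CD}}(k)| = \lambda|S'(k)|\bigl(1 + o(1)\bigr)$ follows once $|S'(k)|$ is shown to track its decay envelope from below.

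Second, I would take logarithms. The leading behavior is $-\ln|S(k)| \sim k^{2/d}\ln B$ and $-\ln|S_{\text{CD}}(k)| \sim k^{2/d}\ln B'$, since the polynomial correction $(d/2)\ln k$ is dominated by $k^{2/d}$ as $k \to \infty$. Therefore
\begin{equation*}
\frac{\ln|S_{\text{CD}}(k)|}{\ln|S(k)|} \;\longrightarrow\; \frac{\ln B'}{\ln B}.
\end{equation*}
To match the stated upper bound, I would use the elementary fact that for $x \in (0,1)$ and $a \in (0,1]$ one has $x \leq x^a$; applying this with $x = \ln B'/\ln B$ and $a = 2/d$ gives the claim, and $1 < B' < B$ makes the right-hand side strictly less than $1$.

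The principal obstacle is the matching lower bound on $|S_{\text{CD}}(k)|$, which Lemma \ref{lem:decay} does not directly supply: the spectral representation of $S'(k)$ in Lemma \ref{lem:spectral} is an oscillating cosine sum, so isolated values of $k$ could in principle suffer destructive interference driving $|S'(k)|$ well below the envelope. Since the conclusion is a $\limsup$ statement, it suffices to secure the lower bound along a subsequence, which can be handled by a phase-averaging argument over a short window in $k$: the high-frequency blocks average out while the low-frequency blocks governed by $B'$ survive, yielding the desired lower envelope for $|S'(k)|$ and hence for $|S_{\text{CD}}(k)|$.
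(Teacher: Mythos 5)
Your route is genuinely different from the paper's. The paper's proof writes $S_{\text{CD}}(k)=\sum_j A_j\bigl[(1+\lambda)\cos(k\theta_j)-\lambda\cos(k\theta'_j)\bigr]$ and Taylor-expands $\cos(k\theta_j)-\cos(k\theta'_j)$ in the perturbation $\delta_j=\theta'_j-\theta_j$, arguing that the contrast injects a linearly growing factor $k\delta_j$ into the low-frequency terms, and then appeals to Lemma~\ref{lem:decay}. You instead apply Lemma~\ref{lem:decay} to $S$ and $S'$ separately, note that the envelope $e^{-k^{2/d}\ln B'}$ of $S'$ dominates that of $S$ because $B'<B$, conclude $S_{\text{CD}}(k)=-\lambda S'(k)\,(1+o(1))$, and read off the log-ratio $\ln B'/\ln B\le(\ln B'/\ln B)^{2/d}$. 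Your envelope-comparison is arguably the sounder heuristic: the paper's expansion parameter $k\delta_j$ is not small as $k\to\infty$, so its ``$-O(k^2)$'' remainder swamps the claimed leading term, whereas your argument needs no Taylor step and correctly identifies that only an upper bound on $|S(k)|$ (which the lemma supplies) and a lower bound on $|S_{\text{CD}}(k)|$ are required.

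The genuine gap is in the one place you flagged and then resolved incorrectly. Since $\ln|S(k)|<0$ for large $k$, the claim $\limsup_k \ln|S_{\text{CD}}(k)|/\ln|S(k)|\le c$ is equivalent to $\ln|S_{\text{CD}}(k)|\ge (c+\epsilon)\ln|S(k)|$ holding for \emph{all} sufficiently large $k$, i.e.\ a pointwise lower bound $|S_{\text{CD}}(k)|\ge|S(k)|^{c+\epsilon}$ eventually. Securing the lower bound only along a subsequence therefore controls the \emph{liminf} of the ratio, not the limsup: at the excluded $k$, destructive interference in the cosine sum makes $\ln|S_{\text{CD}}(k)|$ very negative and pushes the ratio \emph{up}, which is exactly the failure mode the theorem must exclude. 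So the phase-averaging-over-a-window argument does not close the proof; you would need to rule out near-cancellation of $S_{\text{CD}}(k)$ for all large $k$ (or reformulate the statement over averaged windows). To be fair, the paper's own proof has the same hole --- its lower bound carries an uncontrolled $|\sin(k\theta_j)|$ factor and an $O(k^2)$ error it never dominates --- but your writeup makes the limsup-versus-subsequence confusion explicit, and that step as stated is wrong.
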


\begin{proof}
Expanding the contrastive scores:
\begin{equation}
S_{\text{CD}}(k) = \sum_{j=1}^h A_j\big[(1+\lambda)\cos(k\theta_j) \\
\quad - \lambda\cos(k\theta'_j)\big]
\end{equation}
For low-frequency components ($j \leq j_0$), expand using perturbation $\delta_j = \theta'_j - \theta_j$:
\begin{equation}
\begin{aligned}
\Delta_j(k) &\triangleq \cos(k\theta_j) - \cos(k\theta'_j) \\
&\approx k\delta_j\sin(k\theta_j) + \frac{(k\delta_j)^2}{2}\cos(k\theta_j)
\end{aligned}
\end{equation}
The leading term preserves oscillatory behavior while introducing linear growth:
\begin{equation}
|S_{\text{CD}}(k)| \geq \lambda\sum_{j=1}^{j_0}A_jk\delta_j|\sin(k\theta_j)| - O(k^2)
\end{equation}
Combining with Lemma~\ref{lem:decay} establishes the improved decay rate.
\renewcommand{\qedsymbol}{}
\end{proof}

\label{sec:sup_simulation}
\begin{figure*}[htbp]
    \centering
    \scriptsize
    \includegraphics[width=1.75\columnwidth]{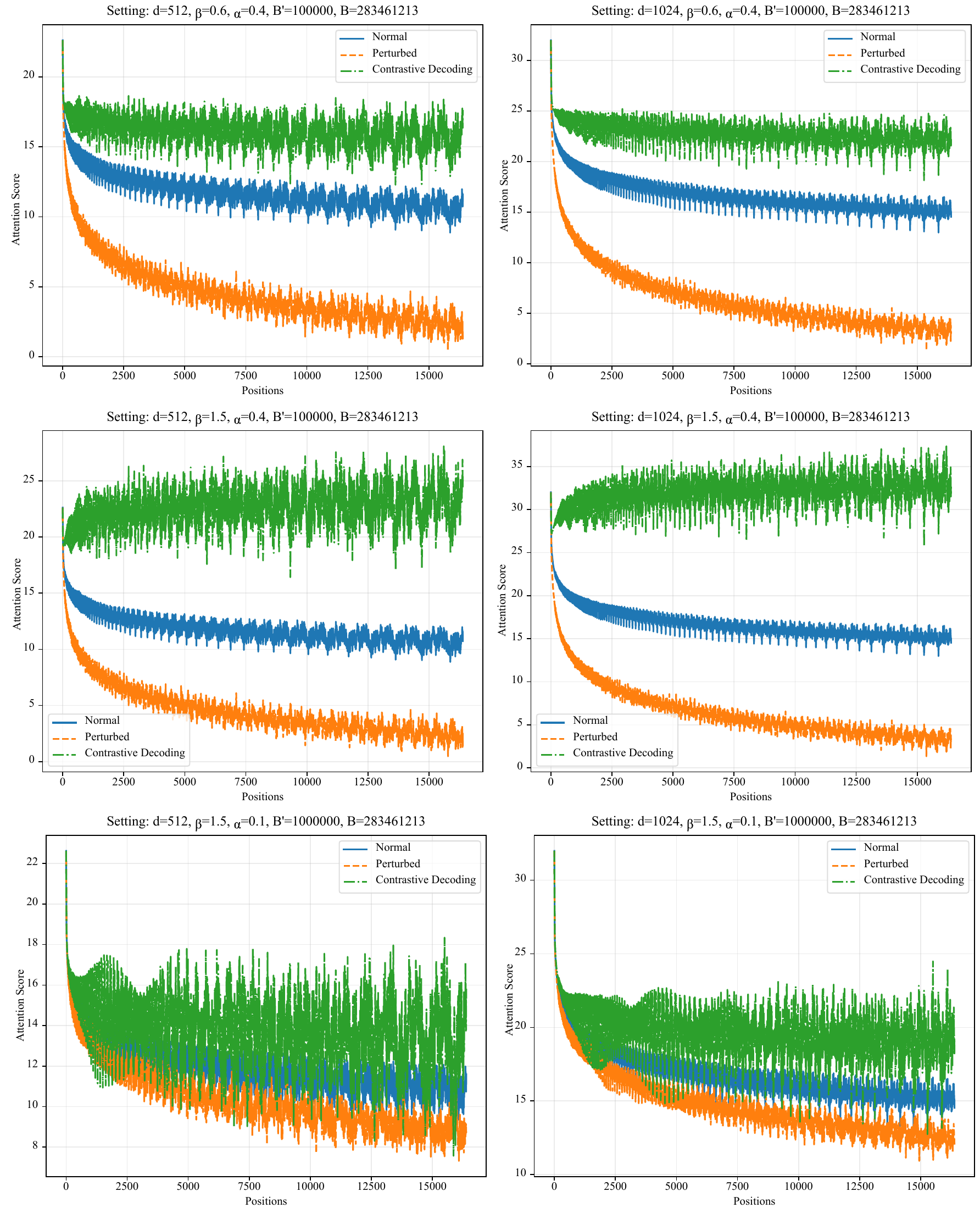} 
    \caption{Long-term decay simulation of attention scores under varying hyperparameter settings, including embedding dimension \( d \), contrastive coefficient \( \beta \), transition function coefficient \( \alpha \), and base frequencies \( B \) and \( B' \). The results demonstrate how PCD mitigates attention decay across different configurations, enhancing the model's long-range awareness.}
    \vspace{-5pt} 
    \label{fig:sup_decay_simulation}
\end{figure*}

\subsection{Long-term Decay Simulation under Hyperparameter}
\label{sec:sup_simulation2}

To comprehensively evaluate the impact of PCD on long-term attention decay, we conducted a series of numerical simulations under various hyperparameter settings. These simulations aimed to analyze how different configurations of embedding dimension \( d \), contrastive coefficient \( \beta \), transition function coefficient \( \alpha \), and base frequencies \( B \) and \( B' \) influence the decay rate of attention scores. 

In our experiments, we systematically varied the embedding dimension \( d \) to examine its impact on the decay dynamics. As shown in Fig.~\ref{fig:sup_decay_simulation}, larger values of \( d \) generally lead to slower decay rates, as the increased dimensionality allows for more nuanced representations of positional information. However, the benefits of larger \( d \) diminish beyond a certain threshold, highlighting the trade-off between model capacity and computational efficiency.

The contrastive coefficient \( \beta \) plays a crucial role in balancing the influence of standard and perturbed logits. Our simulations reveal that moderate values of \( \beta \) (e.g., \( \beta = 2.5 \)) yield the most effective decay mitigation, as they sufficiently amplify long-range awareness without introducing excessive noise. Extreme values of \( \beta \), on the other hand, either fail to enhance long-term attention or disrupt the model's local modeling capabilities.

The transition function coefficient \( \alpha \) governs the smoothness of the transition between original and perturbed frequencies. Our results indicate that \( \alpha \) is relatively stable across different settings, with values in the range \( [0.1, 0.2] \) providing optimal performance. This stability suggests that the transition function is robust to minor perturbations, making it a reliable component of PCD.

Finally, we explored the effects of varying the base frequencies \( B \) and \( B' \). Moderate perturbations (e.g., \( B' = 10^4 \)) were found to be most effective in slowing down the decay rate, as they introduce sufficient variability in low-frequency components without destabilizing the attention mechanism. Extreme values of \( B' \), either too small or too large, lead to suboptimal performance, underscoring the importance of carefully calibrating frequency perturbations.

\subsection{Details of Case Study: PCD Mitigate PSA}
\label{sec:sup_case_study}
In this case study, we illustrate how PCD can alleviate the PSA phenomenon by examining a specific mismatch between the ground-truth label and the model’s prediction. The target label, \texttt{ffeae470-29ae-4a8c-9c56-9b97d9edf8ac}, diverges markedly from the model’s generated output, \texttt{cd501c0360f7}, indicating a notable alignment failure.

To investigate the underlying cause, we first examined the top 18 tokens (by probability) that the model produced during decoding. Specifically, these tokens corresponded to the indices \texttt{4484, 1, 15, 25867, 66, 68, 4578, 791, 544, 69, 7047, 98046, 67, 291, 65, 58923, 29069, 16}, which collectively yielded partial strings such as \texttt{"cd"} while overlooking crucial prefixes like \texttt{"ffe"}. This observation suggests that, despite the ground-truth prefix being comparatively salient in the extended context, the model’s unperturbed decoding mechanism favored distractive or irrelevant tokens, thereby failing to align with the target sequence.

Next, we applied PCD to calibrate the model’s output distribution by introducing controlled perturbations to low-frequency positional encodings and subsequently contrasting them with the standard logits. By re-ranking the token candidates through this contrastive step, PCD significantly elevated the position of the correct prefix tokens (e.g., \texttt{"ffe"}). Empirically, this re-ranking suppressed the model’s inclination toward incorrect, high-probability distractor tokens and guided the decoding process toward generating the appropriate target label.

The example demonstrate how PCD can effectively mitigate PSA by shifting the model’s preference toward contextually relevant tokens. Such recalibration holds particular promise for tasks involving extended contexts or complex sequential outputs, where conventional decoding methods often struggle to discriminate vital information from extraneous content.

\subsection{Logits Visualization}
To further investigate the PSA phenomenon, we conducted a dynamic visualization of logits as the context length increases. As shown in Fig.~\ref{fig:mean_std}, the logit value of the gold label gradually diminishes relative to other tokens as the sequence grows longer. This visualization clearly demonstrates how the gold label's prominence is overshadowed by competing logits, particularly in extended contexts. The attenuation of the gold label's salience underscores the challenges faced by long-context models in maintaining focus on critical information, further motivating the need for decoding strategies like PSA (PCD) to mitigate this degradation.

\begin{figure*}[htbp]
    \centering
    \scriptsize
    \includegraphics[width=1.8\columnwidth]{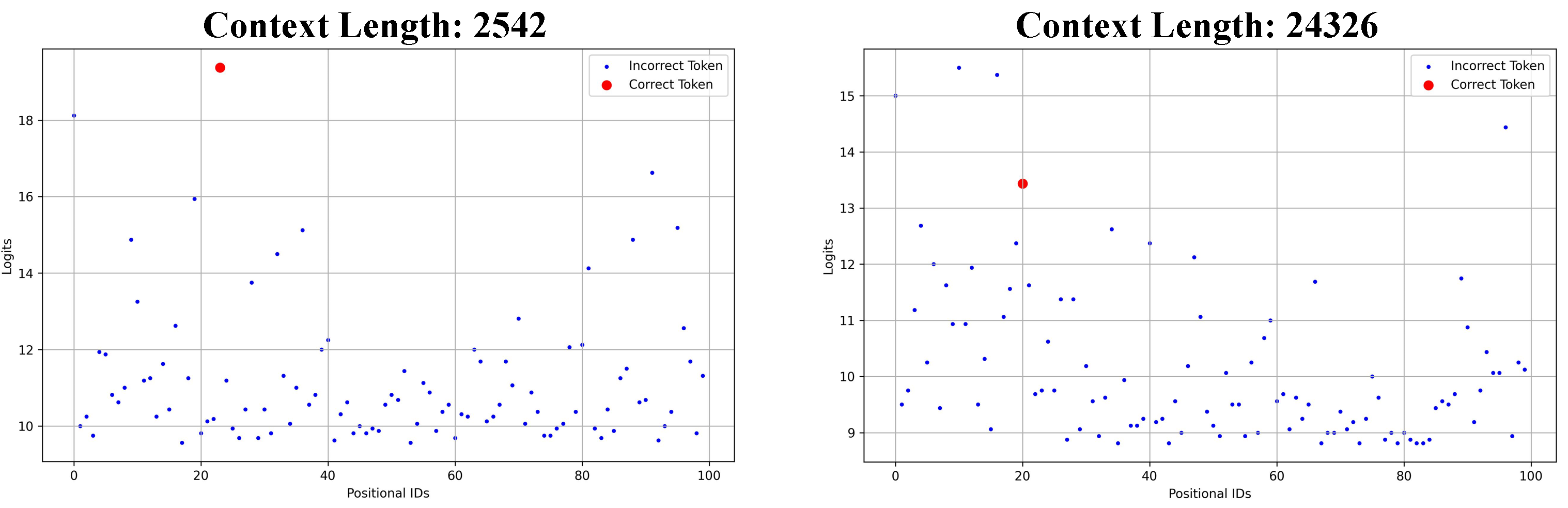} 
    \caption{Dynamic visualization of logits as context length increases, illustrating how the logit value of the gold label diminishes relative to other tokens.}
    \vspace{-5pt} 
    \label{fig:mean_std}
\end{figure*}

\subsection{Related Works}
To augment the long-range awareness of Large Language Models (LLMs), research has primarily focused on four strategies: input design, instruction design, model-driven methods, and data-driven methods. 

Input design, specifically, involves methods such as segment reranking \citep{dsouza2024evaluating, peysakhovich2023attention}, which aims to mitigate the inherent location bias of transformers by rearranging input segments to prioritize pertinent information during inference. However, this approach suffers from the drawbacks of disrupting semantic coherence and incurring significant computational expenses due to the necessity for multiple inferences. Instruction design \citep{zhang2024bench, yu2023paraphrasing}, employs context recalling, prompting the model to access relevant information before completing a task. Despite its potential to remind the model to retrieve information, context recalling does not inherently enhance long-range awareness. Model-driven methods address long-range awareness by modifying attention mechanisms or positional encodings to diminish attention bias and bolster long-distance awareness. For instance, attention calibration techniques segment attention into local and global encodings to achieve a balanced focus \citep{zhang2024found, hsieh2024found} However, these modifications necessitate retraining the model, which is not only costly but also poses challenges to maintaining stability and generalizability. Data-driven methods, which involve training on synthetic or multi-document QA datasets, have demonstrated efficacy in improving retrieval accuracy \citep{dataartificial, an2024make}. Nevertheless, the high cost of annotating long-form datasets poses a significant barrier to obtaining high-quality, large-scale training corpora.

Based on the preceding discussion of PSA, it can be observed that the correct tokens are typically positioned very high in the decoding space. The observation holds significant potential for enhancing long-range awareness and improving rank by devising a decoding strategy that promotes the positioning of these tokens higher in the decoding space.

\subsection{Rationale for Perturbing Low-Frequency Encodings and Employing Contrastive Decoding}
Long-distance awareness is affected by models' positional encodings (PEs) \citep{su2024roformer, press2022train, chi2022kerple}, which are designed with long-term decay: the farther a token is from the current position, the less relevant its information. The high-frequency encoding is primarily responsible for local modeling, while the low-frequency encoding is responsible for global modeling.

Assuming a function $G$ takes the high-frequency encoding $F_h$ and low-frequency encoding $F_l$ of the positional encoding as inputs, and outputs the model’s $\text{logits} = G(F_h, F_l)$. Improving the model's global modeling capacity requires enhancing the contribution of the low-frequency signal $F_l$. 

One way involves amplifying the influence of low-frequency encoding, which is accomplished by the standard production of long context training. The other way introduces perturbations to adjust the signal without additional training costs. It offers two options: 

(1) Directly perturbing high-frequency encoding to indirectly amplify long-distance awareness: A perturbation $\epsilon_h$ is added into high-frequency encoding as $F_h' = F_h + \epsilon_h$. Then the logits expressed as $\mathbf{L} = G(F_h', F_l)$. However, it is important to emphasize that introducing disturbances to the high-frequency encoding will seriously lead to model collapse.

(2) Initially involving a temporary perturbation of low-frequency encoding, followed by contrastive decoding to reversely amplify long-distance awareness: Perturbing the low-frequency encoding $F_l$, where a perturbation $\epsilon_l$ is added, as $F_l' = F_l + \epsilon_l$. The logits are expressed as $\mathbf{L}_0 = G(F_h, F_l + \epsilon_l)$, and the corrected logits $\mathbf{L}_\alpha$ are computed as $\mathbf{L}_\alpha = (1 + \beta) \cdot \mathbf{L}_0 - \beta \cdot {\mathbf{L}}_\delta$. 

Based on the analysis, directly enhancing long-distance capabilities requires training with long-text data, while perturbing the high-frequency encoding can lead to model collapse. However, as a prospective strategy, perturbing the low-frequency encoding followed by contrastive decoding enhances long-distance awareness without compromising the model's base capabilities.

\end{document}